\newcommand{\regdsl}[1]{\texttt{\small #1}}
\newtheorem{theorem}{Theorem}
\theoremstyle{definition}
\newcommand{\worklist}{\mathcal{Q}}
\newcommand{\cmark}{\ding{51}}%
\newcommand{\xmark}{\ding{55}}%
\newtheorem{definition}{Definition}[section]
\newcommand{\irule}[2]{\mkern-2mu\displaystyle\frac{#1}{\vphantom{,}#2}\mkern-2mu}
\DeclareMathOperator*{\argmax}{arg\,max}
\newcommand{\streg}{\textsc{StructuredRegex}}
\newcommand{\partialprog}{P}
\newcommand{\nl}{{{N}}}
\newcommand{\spec}{{{\phi}}}
\newcommand{\dsl}{{{L}}}
\newcommand{\model}{{{M}}}
\newcommand{\grammar}{\mathcal{G}}
\newcommand{\examples}{\mathcal{E}}
\newcommand{\pose}{\mathcal{E^+}}
\newcommand{\nege}{\mathcal{E^-}}
\algnewcommand\Input{\textbf{input: }}
\algnewcommand\Output{\textbf{output: }}
\newcommand{\toolname}{\textsc{OpSynth}\xspace}
\title{Optimal Neural Program Synthesis from Multimodal Specifications}
\author{Xi Ye\quad\quad Qiaochu Chen\quad\quad Isil Dillig\quad\quad Greg Durrett\\
  Department of Computer Science \\
  The University of Texas at Austin \\
  \texttt{\{xiye,qchen,isil,gdurrett\}@cs.utexas.edu} \\ }
\begin{document}
\maketitle
\begin{abstract}
Multimodal program synthesis, which leverages different types of user input to synthesize a desired program, is an attractive way to scale program synthesis to challenging settings; however, it requires integrating noisy signals from the user, like natural language, with hard constraints on the program's behavior. This paper proposes an \emph{optimal neural synthesis} approach where the goal is to find a program that satisfies user-provided constraints while also maximizing the program's score with respect to a neural model. Specifically, we focus on multimodal synthesis tasks in which the user intent is expressed using a combination of natural language (NL) and input-output examples. At the core of our method is a top-down recurrent neural model that places distributions over abstract syntax trees conditioned on the NL input. This model not only allows for efficient search over the space of syntactically valid programs, but it allows us to leverage \emph{automated program analysis} techniques for pruning the search space based on infeasibility of partial programs with respect to the user's constraints. The experimental results on a multimodal synthesis dataset ({\sc StructuredRegex}) show that our method substantially outperforms prior state-of-the-art techniques in terms of accuracy and efficiency, and finds model-optimal programs more frequently.\footnote{Code available: \href{https://github.com/xiye17/OpSynth}{https://github.com/xiye17/OpSynth}}
\end{abstract}

\section{Introduction}

In recent years, there has been a revolution in machine learning-based program synthesis techniques for automatically generating programs from high-level expressions of user intent, such as input-output examples~\citep{deepcoder,song1,robustfill,repl,polozov1,song2} and natural language~\citep{sqlizer,dong16,asn,tranx, gulwaninl, robsql}. Many of these techniques use deep neural networks to consume user input and then perform model-guided search to find a program
that satisfies the user. However, because 
both natural language and input examples 
can be inherently ambiguous~\citep{robustfill,conala}, a recent thread of work on \emph{multimodal synthesis} attempts to combine different types of cues 
to allow program synthesis to effectively scale to more complex problems. Critically, this setting introduces a new challenge: how do we efficiently synthesize programs with a combination of hard and soft constraints from distinct sources?

\begin{figure}[t]
\centering
\includegraphics[width=\linewidth,trim=450 150 450 150,clip]{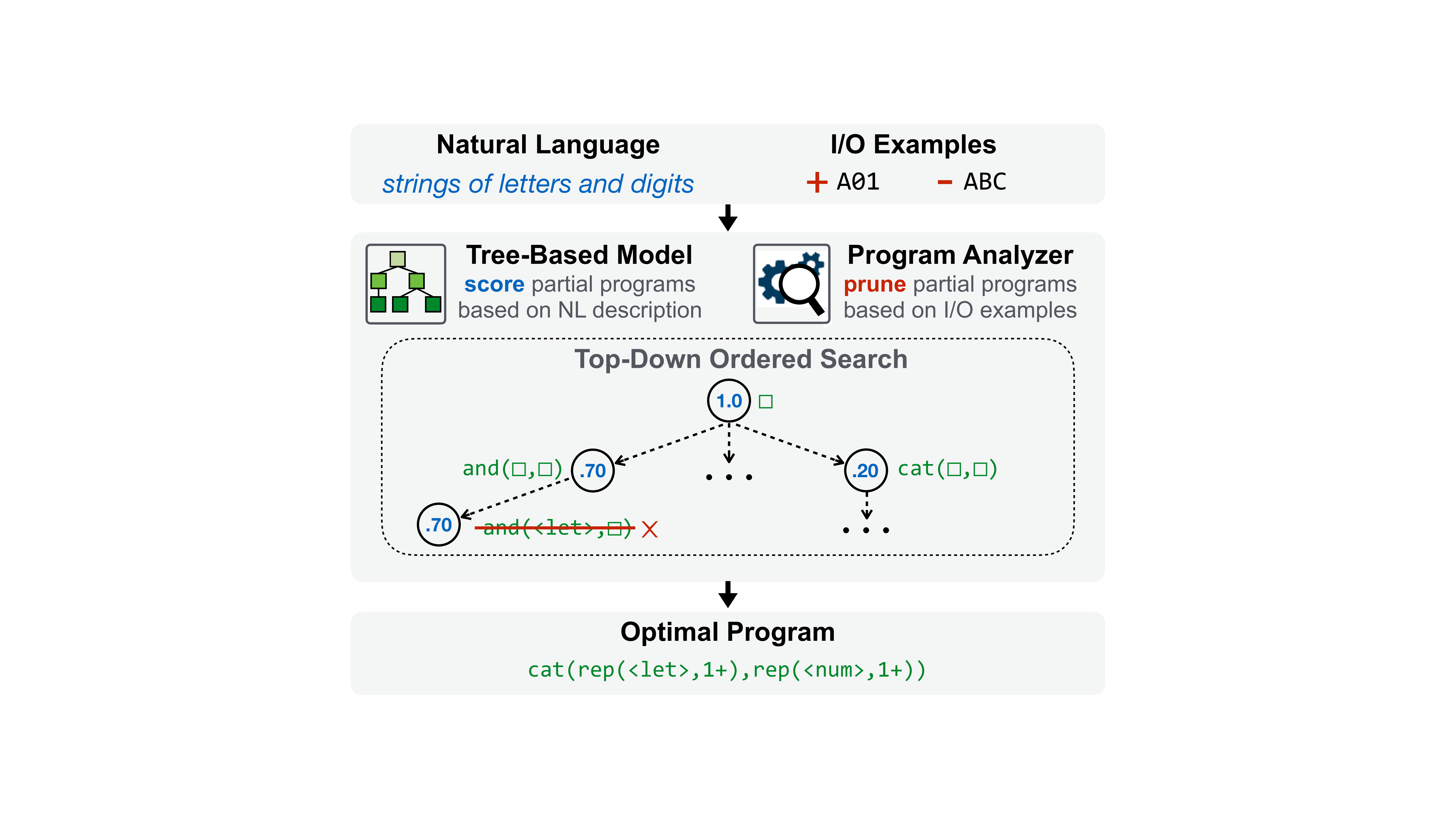}
\caption{The framework of our multi-modal optimal synthesis approach. A tree-structured model scores partial programs based on the NL description and a program analyzer prunes the search space based on the I/O examples. Our algorithm searches in a best-first fashion following the scores, and hence ensures the optimality of the output program with respect to the model. }
\label{fig:framework}
\vspace{-0.1in}
\end{figure}

The core contribution of this paper is to formulate multimodal synthesis as an \emph{optimal synthesis} task and propose an optimal synthesis algorithm to solve it.
The goal of optimal synthesis is to generate a program that satisfies any hard constraints provided by the user while also maximizing the score under a learned neural network model that captures noisy information, like that from natural language.
In practice, there are \emph{many} programs that satisfy the hard constraints, so this maximization is crucial to finding the user's intended program: if our neural model is well-calibrated, a program that maximizes the score under the neural model is more likely to be what the user wants.

In our setting (Figure~\ref{fig:framework}), we train a neural model to take natural language input that can be used to guide the search for a program consistent with user-provided examples. Because our search procedure enumerates programs according to their score (values in blue in Figure~\ref{fig:framework}), the first enumerated program satisfying the examples is guaranteed to be optimal according to the model. A central feature of our approach is the use of a tree-structured neural model, namely the \emph{abstract syntax network (ASN)}~\citep{asn}, for constructing syntactically valid programs in a top-down manner. The structure of the ASN model restricts search to programs that are syntactically correct, thereby avoiding the need to deal with program syntax errors~\citep{spoc}, and it allows us to search over programs in a flexible way, without constraining a left-to-right generation order like seq2seq models do. More importantly, the use of top-down search allows us to more effectively leverage \emph{automated program analysis} techniques for proving infeasibility of partial ASTs. As a result, our synthesizer can prune the search space more aggressively than prior work and significantly speed up search. While our network structure and pruning techique are adapted from prior work, we combine them and generalize them to this optimal neural synthesis setting in a new way, and we show that our general approach leads to substantial improvements over previous synthesis methods.

We  implement our method in a synthesizer called \toolname
and evaluate it on the challenging {\sc StructuredRegex} dataset~\citep{structregex} for synthesizing regular expressions from linguistically diverse natural language descriptions and positive/negative examples. We compare our approach against a range of techniques from prior work and ablations of our own method.  \toolname\ achieves substantial gain over past work by solving 60.8\% (resp. 48.8\%) of the programs of Test (resp. Test-E) set in \streg{}. These results represent a roughly 7-10\% improvement over prior work with a roughly $3
\times$ speedup due to the improved pruning.

\section{Problem Formulation}\label{sec:formulation}

\paragraph{Context-free grammar.} In this work, we assume that the syntax of the target programming language $\dsl$ is specified as a context-free grammar  $\grammar = (V, \Sigma, R, S_0)$ where $V$ is a set of non-terminals, $\Sigma$ is the set of terminal symbols, $R$ is a set of productions, and $S_0$ is the start symbol. We use the notation $s$ to denote any \emph{symbol} in $V \cup \Sigma$. The grammar in Figure~\ref{fig:grammar} has two nonterminals ($S_0$ and $V_1$) and three terminals (\texttt{\small cat}, \texttt{\small <0>}, and \texttt{\small <1>}). To simplify presentation in the rest of the paper,  we assume that each grammar production is of the form $v \rightarrow f(s_0, \ldots, s_n)$ where $f$ is a language construct (e.g., a constant like $0$ or a built-in function/operator like {\tt cat}, $+$, etc.). 

We represent programs in terms of their abstract syntax trees (AST). We assume that every node $n$ in the tree is labeled with a grammar symbol $s$ (denoted $\mathcal{S}(n)$) and a production $r$ (denoted $\mathcal{R}(n)$) that indicates the CFG production that was used to assign the terminal symbol for that node (if applicable). Figure \ref{fig:ast_p} shows an AST representation of the program $\texttt{cat}(\texttt{cat}(\texttt{<0>},\texttt{<1>}),\texttt{<0>})$ generated using the simple grammar shown in Figure \ref{fig:grammar}. Similar AST representations have been used in recent work on grammar-based program generation models \cite{tranx, asn, sun2020treegen}.

\begin{figure*}[t]
\vspace{-0.2in}
    \centering
    \begin{minipage}{0.22\textwidth}
    \[
    \begin{array}{ll}
    S_0 \rightarrow& V_1 \\
    V_1 \rightarrow& \texttt{<0>} \\
    & | \texttt{<1>}\\
    & | \texttt{cat}(V_1, V_1)
    \end{array}
    \]
    \caption{Example grammar for a simple language.}
    \label{fig:grammar}
    \end{minipage}
    \hspace{0.1in}
    \begin{minipage}{0.36\textwidth}
    \begin{center}
        \includegraphics[width=1.0\textwidth,trim=250 200 85 160,clip]{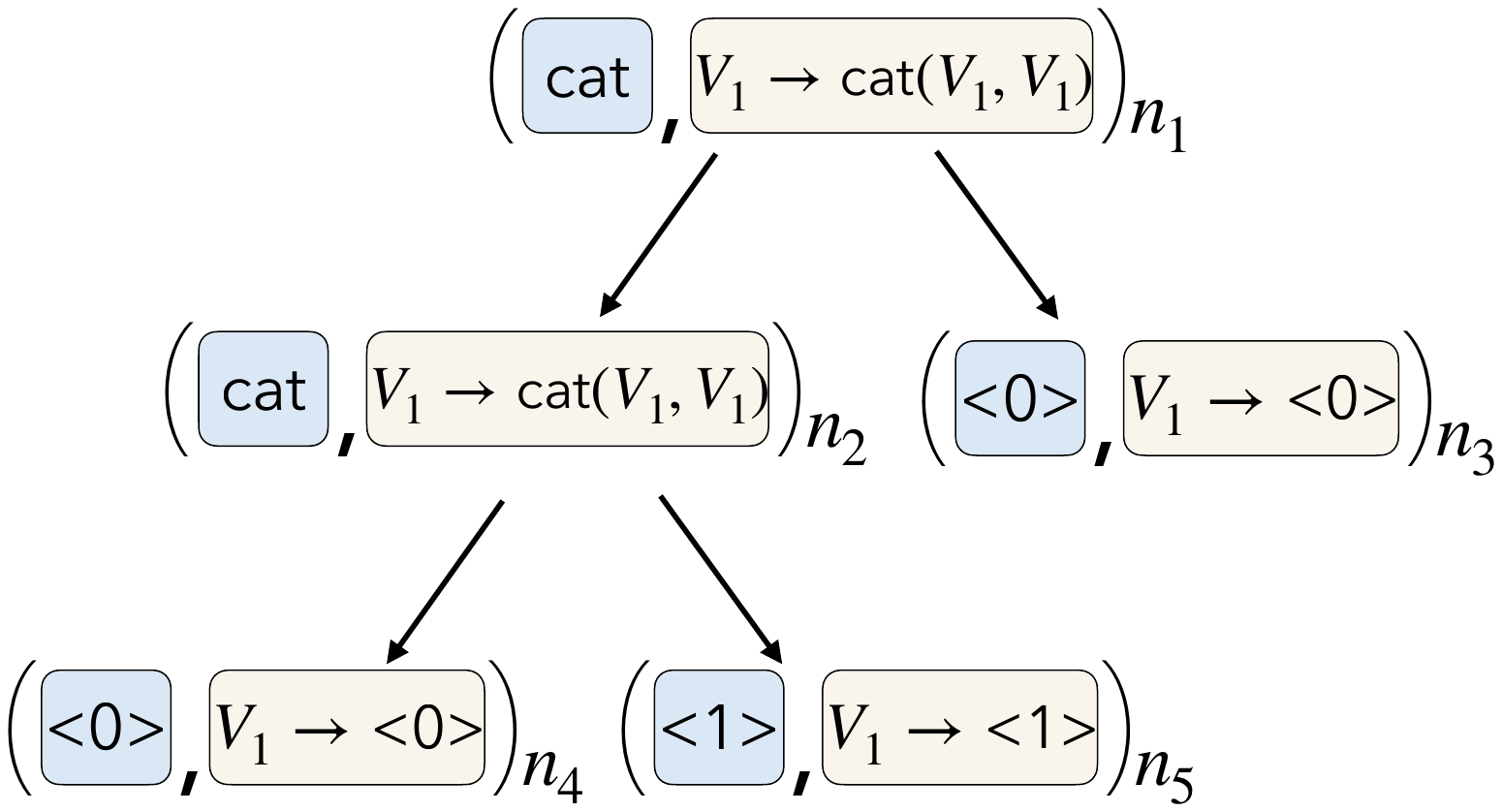}
        \caption{Example of an AST derivation of \texttt{\small{cat(cat(<0>,<1>),<0>)}}. Blue boxes represent symbols and yellow boxes represent productions.  } \label{fig:ast_p}
    \end{center}
    \end{minipage}
    \hspace{0.1in}
    \begin{minipage}{0.36\textwidth}
    \begin{center}
        \includegraphics[width=1.0\textwidth,trim=230 190 100 180,clip]{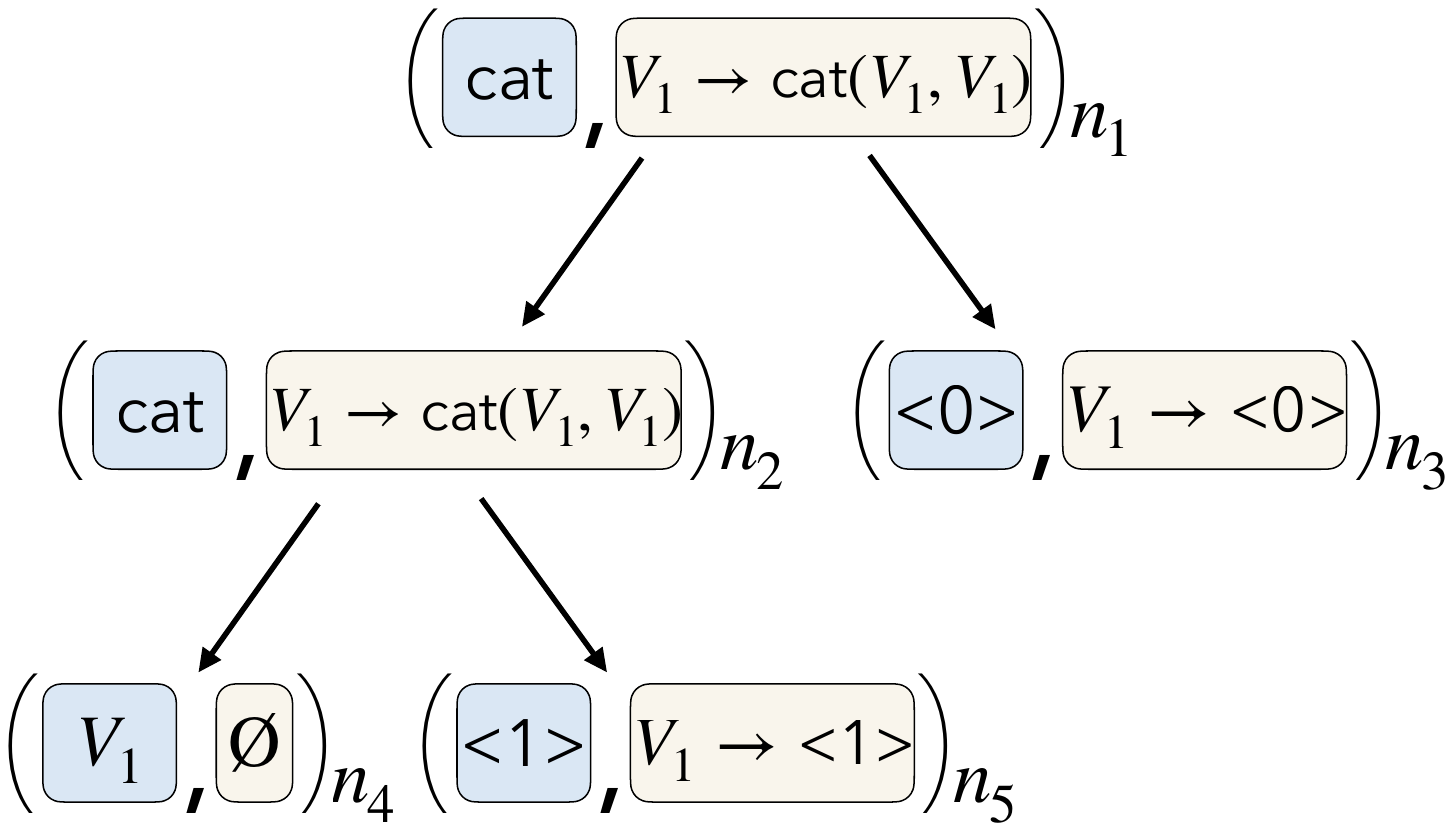}
        \caption{Example of a partial program. $n_4$ is a leaf node with non-terminal symbol $V_1$. }\label{fig:ast_pp}
    \end{center}
    \end{minipage}
\end{figure*}
 
\paragraph{Partial programs.} For the purposes of this paper, a \emph{partial program} is an AST in which some of the nodes are labeled with non-terminal symbols in the grammar (see Figure~\ref{fig:ast_pp}). For a \emph{complete program}, all node labels are terminal symbols. We use the notation {\sc Expand}$(\partialprog, l, r)$ to denote replacing leaf $l$ with production $r$, which adds $n$ nodes $s_1, \ldots, s_n$ to the tree corresponding to the yield of $r$.

\paragraph{Consistency with examples.} In this paper, we focus on the multimodal synthesis problem where the user provides a logical specification $\phi$ and  a natural 
language description. Specifically, we focus on logical specifications in the form of positive and negative examples of the program behavior. Each example is a pair $(x, y)$ such that, for a positive example, we have $P(x)=y$ for the target program $P$, and for a negative example, we have $P(x) \neq y$.  Given a set of examples $\examples = \pose \cup \nege$ and program $P$, we write $P \models \examples$, if we have $P(x) = y$ for every positive example in $\pose$ and we have $P(x) \neq y$ for every negative example in $\nege$. If $P$ is a partial program,  $P \not \models \phi$ indicates that there is no completion of $P$ that  satisfies the specification $\phi$.

\paragraph{Optimal multimodal synthesis problem.} A second input to our multimodal synthesis problem is a natural language description of the task. We define a model $\model_{\theta}(P \mid \nl)$ that yields the probability of a given program conditioned on the description (Section~\ref{sec:algorithm}). Given a programming language $\dsl$ specified by its context-free grammar, a logical specification $\spec$ (e.g., a set of positive and negative examples), natural language description $\nl$, and a model $\model_{\theta}$, our goal is to find the most likely program in the language satisfying the constraints:
\begin{align}\label{eq:opt-synth}
 \argmax_{ \partialprog \in \dsl \ \land \ P \models \spec} \model_{\theta}(\partialprog \ | \ \nl)   
\end{align}

\section{Optimal Neural Synthesis Algorithm}

\begin{figure*}[t]
\centering
\includegraphics[width=\textwidth,trim=0 200 0 200,clip]{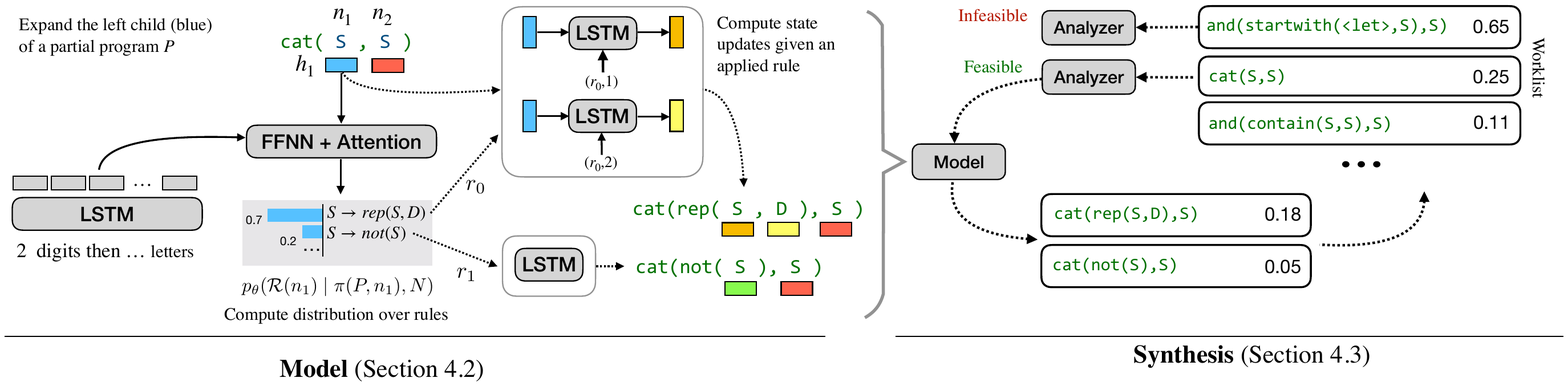}
\caption{Left: our neural model. A vector $h_i$ associated with a nonterminal is used to predict a distribution over grammar rules. Each rule instantiates new nonterminals which receive updated vectors based on LSTMs. Right: partial programs are taken from the worklist, analyzed to determine feasibility, and expanded, then the new partial programs are added to the worklist.}
\label{fig:asn}
\vspace{-0.2in}
\end{figure*}
\label{sec:algorithm}

We consider a class of models $M_\theta$ that  admit efficient optimal synthesis. Any model with the properties described in this section can be plugged into our synthesis algorithm (Section~\ref{sec:synthesis}). 

\begin{definition} \textbf{AST Path}
Given a node $n$ in a partial program $\partialprog$, we define the AST path  $\pi(\partialprog, n) = ((n_1,i_1), \ldots, (n_k,i_k))$ to be a sequence of pairs $(n_j, i_j)$ where (1) AST node $n_{j+1}$ is the $i_j$'th child of AST node $n_j$ and (2) the $i_k$'th child of $n_k$ is $n$. For instance, for  the partial program in Figure \ref{fig:ast_pp}, we have $\pi(P, n_4) = ((n_1, 1), (n_2, 1))$.
\end{definition}

\begin{definition} \textbf{Concrete/Inconcrete nodes}
Given a partial program $P$, we define the concrete nodes of $P$ as $\mathcal{C}(P)$ to be the nodes which have production rules assigned to them. The inconcrete nodes $\mathcal{I}(P)$ are the non-terminal leaf nodes whose production rules haven't been determined and need to be fill in in order to form a complete program.
\end{definition}

Given a partial program $P$, we define the probability of generating $P$ as the product of the probabilities of applying the productions labeling each node in the AST.  There are a number of possible ways we could factor and parameterize this distribution, including  PCFGs, where the distribution depends only on the parent, or as sequence models over a pre-order traversal of the tree~\citep{dong16, tranx,algolisp}. We choose the following factorization, similar to that used in Abstract Syntax Networks (ASN)~\citep{asn}, where a production rule depends on the derivation path leading to that nonterminal:
\begin{equation}
p_\theta(P \mid \nl)=\prod_{n\in \mathcal{C}(P) } p_\theta(\mathcal{R}(n) \mid \pi(P,n),\nl)
\end{equation}
The chief advantage of this factorization is that the score of a partial program is \textbf{invariant to the derivation order of that program}, assuming they were generated according to some topological ordering. Two derivations of the same tree $P$ that differ only in the order that child branches were generated are still assigned the same probability, allowing for flexibility in the search process. Second, for a partial program $P$, the distribution over rules of every unexpanded non-terminal leaf node does not depend on the others', which allows the estimation of the \emph{upper bound} (maximum possible probability) of the complete programs that can be derived from $P$. Specifically, we define the upper bound of the complete programs that can possibly be derived from a partial program $P$ as:

\begin{multline}
 u_\theta(P \mid \nl)=  p_\theta(P\mid N) \\ \prod_{n\in \mathcal{I}(P)} \max_r p_\theta(r \mid \pi(P,n),\nl). 
\end{multline}

This bound incorporates the known probabilities of concrete nodes as well as the minimum cost of filling inconcrete non-terminals, and thus more accurately estimates the cost of the optimal complete program given this partial program. A sequence model traversing the tree with a fixed order cannot estimate such an upper bound as the probabilities of inconcrete nodes are not known.


\subsection{Neural Model}
\label{sec:neural}
We instantiate the neural model defined above using a simplified version of ASN~\citep{asn}, which respects the $p_\theta(\mathcal{R}(n) \mid \pi(P,n),\nl)$ factorization for the production of each node in the tree. Figure~\ref{fig:asn} shows how ASN recursively computes the probability of labeling a node $n$ as $\mathcal{R}(n)$.

Consider the partial program \texttt{\small cat}($\mathcal{S}$($n_1$), $\mathcal{S}$($n_2$)); we need to define the probability distribution over legal productions on the first node $n_1$: $p_\theta(\mathcal{R}(n_1) \mid \pi(P,n),N) = p_\theta(\mathcal{R}(n_1) \mid \{(\texttt{\small{cat},1})\},N)$.
 
We encode the AST path using an LSTM \citep{lstm}. Define LSTM($h_0$,$(r_j,i_j)$) to be an LSTM with initial state $h_0$ and which,  at each timestep, consumes a tuple consisting of a node $n_j$ and a parent-child index $i_j$ (i.e., an element in $\pi(P,n)$).\footnote{This abstraction allows our LSTM to implement the hidden state computation of the ``constructor'' module from \citet{asn}. Our production rule model follows the ``primitive'' and ``composite type'' modules.} We embed each tuple $(n_j, i_j)$ by $W_{\mathcal{R}(n_j),i_j}$, where $W$ is specialized to the rule and position. Then: $h_{\text{root}} = \text{LSTM}(\nl)$ and $h_n=\text{LSTM}(h_{\text{root}},\pi(P,n))$
where $\text{LSTM}(\nl)$ denotes an encoding of the natural language input. The hidden state $h_n$ encodes both the user's NL specification as well as where we are in the parse tree, allowing us to model which grammar symbol should be likely at this position. 

Given this hidden state $h_n$, the probability for each production rule at node $n$ is computed using a feedforward neural network (FFNN) module and attention over the NL input:

\begin{multline*}
    p_\theta(\cdot \mid \pi(P,n),N)= \\ \text{softmax}(\text{FFNN}(h_n;\text{Attn}(h_n, \text{LSTM}(\nl))))
\end{multline*}

During search, each \texttt{Expand} operation instantiates a node $n$ with each possible rule according to the probabilities above, then computes the hidden states for any new nonterminals using the LSTM.

\subsection{Synthesis}
\label{sec:synthesis}

\begin{figure}
    \begin{algorithm}[H]
    \footnotesize
    \caption{Synthesis Algorithm}
    \begin{algorithmic}[1]
    \Procedure{$\textsc{OpSynth}$}{$\grammar, \spec, \nl, \model_\theta$}
    \Statex \Input{A CFG $\grammar = (V, \Sigma, R, S_0) $, specification $\spec$, natural language $\nl$ and model $\model_\theta$}
    \Statex \Output{Complete program $P$ with highest probability under $\model_\theta$ that satisfies $\spec$, or $\bot$ (no program exists)}
    \vspace{0.05in}
    \State $\worklist := \{(S_0, 1)\}$;
    \While{$\worklist \neq \emptyset$}
    \State $(\partialprog, \rho) := \worklist.\texttt{dequeue}()$; \Comment{upper bound $\rho$ associated with the partial program $P$}
    \If{\texttt{Infeasible}($\partialprog, \spec$)} \texttt{continue};
    \EndIf
    \If{\texttt{IsConcrete}($\partialprog$)} \Return $\partialprog$;
    \EndIf
    \State $l := \texttt{SelectLeaf}(\partialprog)$
    \For{$r \in \texttt{Supp}(\model_\theta(\pi(\partialprog, l), \nl))$}
    \State $\partialprog' := \texttt{Expand}(\partialprog, l, r)$
    \State $\worklist.\texttt{add}((\partialprog', u_\theta(P'|N))$
    \EndFor
    \EndWhile
    \State \Return $\bot$;
    \EndProcedure
    \end{algorithmic}
    \label{fig:algorithm}
    \end{algorithm}
\end{figure}

\begin{figure*}
\centering
\[
\begin{array}{cr}
\begin{array}{c}
\ \ \ \ \ \ 
\irule{
\begin{array}{c}
\mathsf{Root}(\partialprog) = n \quad  \mathcal{S}(n) \in V
\end{array}
}{
 \partialprog   \hookrightarrow (y = \top, y = \bot)
} \ \ {\rm {\text{(a)}}}
\\ \\ 
\irule{
\begin{array}{c}
\mathsf{Root}(\partialprog) = n   \quad 
n_i \in \mathsf{Children}(\partialprog) \quad 
\mathsf{Subtree}(\partialprog,n_i)   \hookrightarrow  (\psi_i^+(y, \mathbf{x}), \psi_i^-(y, \mathbf{x}))
\end{array}
}{
 \partialprog  \hookrightarrow (\exists \mathbf{z}. ( \Phi^+(\mathcal{S}(n))) \land \bigwedge_i \psi_i^+[z_i/y]), \exists \mathbf{z}. ( \Phi^-(\mathcal{S}(n))) \land \bigwedge_i \psi_i^-[z_i/y])
} \ \ {\rm {\text{(b)}}}   
\\ \\
\irule{
\begin{array}{c}
\partialprog \hookrightarrow(\psi^+(y, \mathbf{x}), \psi^-(y, \mathbf{x}))
\quad
\mathbf{UNSAT}(\bigwedge_{(\mathbf{i},o) \in \pose} \psi^+[o/y, \mathbf{i}/\mathbf{x}]  \land  \bigwedge_{(\mathbf{i},o) \in \nege} \neg \psi^-[o/y, \mathbf{i}/\mathbf{x}])

\end{array}
}{
P \not \models (\pose, \nege)
} \ \ {\rm {\text{(c)}}}
\end{array}
\end{array}
\]

\caption{Inference rules describing procedure {\sc Infeasible}($P, \phi$) for specification $\phi$ consisting of positive examples $\pose$ and negative examples $\nege$. Rules (a)-(b) of the form $\partialprog \hookrightarrow (\phi^+, \phi^-)$ generate a pair of logical formulas over- and under- approximating the semantics of partial program $\partialprog.$ The notation $\psi[z/y]$ denotes substituting variable $y$ with $z$ in formula $\psi$.}
\label{fig:encode} 
\end{figure*}

In this section, we describe a search algorithm to solve the optimal neural synthesis problem defined in Equation~\ref{eq:opt-synth}.

The key idea is to maintain a priority list $\worklist$ of partial programs, ranked according to the upper bound ($u_\theta(P)$) probability of the complete programs that can be derived from this partial program.
Then, in each iteration of the search procedure, we pick the highest upper bound partial program $\partialprog$ in $\worklist$, check its feasibility using  program analysis, and if it is feasible, expand one of the non-terminals in $\partialprog$ using the applicable CFG productions. Since complete programs are dequeued from $\worklist$ in decreasing order of their probability according to $\model_\theta$, the first complete program that satisfies $\phi$ is guaranteed to be optimal under $\model_\theta$ (proof in the in appendix); thus, our algorithm is guaranteed to return an optimal program if a solution exists.

\paragraph{Infeasibility pruning} Our top-down search allows us to exploit program analysis techniques to prune the search space, by determining whether $\partialprog$ is infeasible with respect to the user's hard constraints. A common way of doing this is to use well-known \emph{abstract interpretation} techniques from the programming languages literature to approximate  program semantics~\citep{cousot77, popa-book}. In particular, given a partial program $\partialprog$, the idea behind the feasibility checking procedure is to generate a pair of logical formulas $(\psi^+, \psi^-)$ over- and under-approximating $\partialprog$'s semantics respectively. If there is any positive example $e^+ \in \pose$ that is inconsistent with $\psi^+$, then the partial program is infeasible. Similarly, if there is any negative example $e^- \in \nege$ that satisfies $\psi^-$, we can again conclude that $\partialprog$ must be infeasible.


Figure~\ref{fig:encode} describes our feasibility checking procedure in terms of inference rules, where rules (a) and (b) generate a pair of over-  and under-approximations of the program, and rule (c) checks feasibility of these approximations with respect to the provided examples.
Here,  free variables $\mathbf{x}$ in the formula represent program inputs, and free variables $y$ represent the program output.  The existentially quantified variables $\mathbf{z}$ corresponds to values of sub-expressions.  The first rule states that ``holes" (i.e., non-terminals) in the partial program are over-approximated using $y=\top$ meaning the sub-program can return anything, and they are under-approximated using $y=\bot$, meaning that the sub-program returns nothing. 
The second rule is used to (recursively) construct an approximation of a sub-AST rooted at node $n$. This rule utilizes a pair of mappings $\Phi^+, \Phi^-$ where $\Phi^+$ (resp. $\Phi^-$) gives an over-approximating (resp. under-approximating) semantics for each language construct.  
In rule (b), each child formula $\psi^+_i, \psi_i^-$ must be satisfied as well as the parent formula, and these are unified by a shared set of new existentially-quantified variables.

The final rule uses the generated over- and under-approximations of the partial program to check feasibility. In particular, we conclude that the partial program is infeasible if there is any positive example $e^+ \in \pose$ that is inconsistent with $\psi^+$or  any negative example $e^- \in \nege$ that satisfies $\psi^-$.

\paragraph{Instantiation of the $\textsc{Infeasible}$ procedure for the regex domain}

Recall that $\textsc{Infeasible}$ prunes a given partial program $\partialprog$ by checking  consistency between the approximate program semantics and the given examples. In the regex domain, we encode the semantics of a regex in terms of the set of strings it can match.
To enable checking  consistency between a given example and the regex, given a string $s$, we use a program $\texttt{InLang}(s, \partialprog)$ (denoted as $\partialprog'$) to represent whether $s$ is in the set of strings that can be matched by $\partialprog$. 

As an example,
consider the partial program $\partialprog$: $\texttt{\small cat}(\texttt{or}(\texttt{\small <0>},V_1),\texttt{\small <1>})$. We encode the semantics of the program $\partialprog'$: $\texttt{InLang}(x, \partialprog)$ and ultimately end up with over- and under-approximations $(\psi^+, \psi^-)$ as follows:

\small
\begin{align*}
(\psi^+, \psi^-) =& (y \wedge (x \in  \texttt{cat}(\texttt{or}(\texttt{<0>},\top),\texttt{<1>})), \\& y \wedge (x \in  \texttt{cat}(\texttt{or}(\texttt{<0>},\bot),\texttt{<1>})))
\end{align*}
\normalsize
Intuitively, we've simply replaced the nonterminal $V_1$ by either $\top$ or $\bot$, indicating that all strings or no strings are matched by the eventual program at $V_1$. In this case, the approximation is simple, but in general it cannot just be written down intuitively. We produce it recursively using the procedure in Figure~\ref{fig:encode}, which yields the following intermediate over- and under-approximated formulas:


\small
\begin{align*}
    (\psi^+, \psi^-) =& ((\exists \textbf{z}. y \wedge (x \in z
    _0 \wedge \psi^+_0[z_0/y])), \\&\exists \textbf{z}.y \wedge ( x \in z_0 \wedge \psi^-_0[z_0/y]))) \\
    (\psi_0^+, \psi_0^-) =& (\exists \textbf{z}. y =  \texttt{cat}(z_1, z_2) \wedge \psi_1^+[z_1/y] \wedge \psi_2^+[z_2/y], \\&\exists \textbf{z}. y =  \texttt{cat}(z_1, z_2) \wedge \psi_1^-[z_1/y] \wedge \psi_2^-[z_2/y]) \\
    (\psi_1^+, \psi_1^-) =& (\exists \textbf{z}.y =  \texttt{or}(z_3, z_4) \wedge \psi_3^+[z_3, y] \wedge \psi_4^+[z_4/y], \\ & \exists \textbf{z}.y =  \texttt{or}(z_3, z_4) \wedge \psi_3^-[z_3, y] \wedge \psi_4^-[z_4/y]) \\
    (\psi_2^+, \psi_2^-) =& (y = \texttt{<1>}, y = \texttt{<1>}) \\
    (\psi_3^+, \psi_3^-) =& (y = \texttt{<0>}, y = \texttt{<0>}) \\
    (\psi_4^+, \psi_4^-) =& (y = \top, y = \bot)
\end{align*}
\normalsize

To confirm the utility of this representation, suppose we have a positive example $i = \texttt{"a1"}, o = \texttt{True}$ and a negative example $i = \texttt{"01"}, o = \texttt{True}$. According rule (c) of Figure \ref{fig:encode}, we check if the following formula is unsat:

\small
\begin{align*}
\texttt{True} \wedge (\texttt{"a1"} \in \texttt{cat}(\texttt{or}(\texttt{<0>},\top),\texttt{<1>})) \wedge \\ \neg (\texttt{True} \wedge (\texttt{"01"} \in \texttt{cat}(\texttt{or}(\texttt{<0>},\bot),\texttt{<1>})))    
\end{align*}
\normalsize

Since the under-approximated semantics of $\partialprog$ contains the string $\texttt{"01"}$, this formula is indeed unsat and we are able to prune this partial program.

\section{Experimental Setup}
\label{sec:experimental_setup}

We evaluate our synthesizer on  the English \streg{} dataset for multimodal synthesis of regular expressions. This dataset contains 3520 labeled examples, including an NL description, positive/negative examples, and the target regex.  We choose this dataset for our evaluation because (1) it is only the dataset containing both examples and NL where the NL description \emph{is written by humans}, and (2) this dataset is quite challenging, with existing techniques achieving under 50\% accuracy.

\paragraph{Implementation Details}
\label{sec:implementation}

As stated in Section~\ref{sec:neural}, our model is an Abstract Syntax Network tailored to fit the regex DSL used in \streg{}. We train our neural model to maximize the log likelihood of generating ground truth regexes given the NL using the Adam optimizer~\citep{adam}, stopping when the performance on dev set converges. More details are in the appendix. 

We implement the infeasibility checking procedure for our regex DSL by encoding the semantics of each operator in the theory of strings~\citep{strings}. Since all existentially quantified variables in the resulting formula can be eliminated through substitution, the resulting constraints are of the form $s \in R$ (or $s \not \in R$) where $s$ is a string constant and $R$ is a regular expression. Thus, we can check the satisfiability of these formulas using the Bricks library~\citep{automaton}. The appendix describes both the semantics of the DSL constructs as well as the rules used to generate the encoding a partial program,

Because of our infeasibility check, the order of expanding non-terminals can impact the efficiency of our search, as we want to prune any infeasible partial programs when they are less concrete.
We experimented with several methods of selecting a leaf node to expand, including pre-order traversal, choosing high-level nodes first, and choosing lowest-entropy nodes first. Pre-order traversal seemed to work best;
details about the expansion order can be found in the supplementary.

\begin{table*}[t]
    \centering
    \small
    \begin{tabular}{l|cccc|cccc}
    \toprule
        \multirow{2}{*}{Approach}  & \multicolumn{4}{c|}{Test} & \multicolumn{4}{c}{Test-E}  \\
         & \%Sol & \%Cons &  \#St &Time & \%Sol & \%Cons & \#St & Time \\
        \toprule
        AlphaRegex & \phantom{0}3.6 & 51.8 & 1.4\textsubscript{10\textsuperscript{6}} & 51.0 & \phantom{0}3.5 & 49.6 & 1.4\textsubscript{10\textsuperscript{6}} & 53.8 \\
        DeepCoder & \phantom{0}1.1 & \phantom{0}6.2 & 7.4\textsubscript{10\textsuperscript{4}} & 84.7 & \phantom{0}1.3 & \phantom{0}6.0 & 6.8\textsubscript{10\textsuperscript{4}} & 86.2\\
        RobustFill & \phantom{0}3.5 & 39.4 & 1.9\textsubscript{10\textsuperscript{3}} &21.1 & \phantom{0}3.5 & 38.4& 2.0\textsubscript{10\textsuperscript{3}} & 22.1\\
        \midrule
        \textsc{Sketch} & 45.2 &75.4 & 3.1\textsubscript{10\textsuperscript{3}} &18.4 & 29.8& 62.8 & 3.5\textsubscript{10\textsuperscript{3}} & 21.5\\
        \textsc{TreeSearch} & 48.7&  69.8 & $-$ &13.2 & 31.1 & 56.1 & $-$ & 19.1\\ 
        \midrule
        Seq2Seq\textsuperscript{+$\mathcal{P}$} & 48.2 &78.2 & 1.3\textsubscript{10\textsuperscript{4}} & 66.5 & 36.0 &64.3 & 1.5\textsubscript{10\textsuperscript{4}} & 76.8 \\
        TranX\textsuperscript{+$\mathcal{P}$} & 53.1 & 87.8 & 5.6\textsubscript{10\textsuperscript{3}} & 31.4 & 38.1 & 77.4& 6.4\textsubscript{10\textsuperscript{3}} & 36.1\\
        \midrule
        \textsc{ASN}\textsuperscript{+$\mathcal{P}$} & 58.0 &87.8 & 1.3\textsubscript{10\textsuperscript{3}} & 13.6 & 45.8 &78.2 & 1.4\textsubscript{10\textsuperscript{3}} & 15.1\\

        \textsc{OpSynth} & \bf 60.8 & \bf 88.4 & \bf 8.8\textsubscript{10\textsuperscript{2}} & \bf \phantom{0}9.5 & \bf 48.8 & \bf 80.9 & \bf 1.3\textsubscript{10\textsuperscript{3}} & \bf 14.2   \\
        \textsc{OpSynth}\textsuperscript{-$\mathcal{P}$} & 56.6 &78.5&  $-$ &13.8 & 44.7 & 67.0 & $-$ & 20.3\\
        
        \textsc{OpSynth}\textsuperscript{+$\mathcal{R}$} & 59.9& 88.2 & 8.8\textsubscript{10\textsuperscript{2}} & \phantom{0}9.9& 45.0 & 80.7 & 1.3\textsubscript{10\textsuperscript{3}} & 14.9\\
        \bottomrule
    \end{tabular}
         \caption{\streg{} results: fraction of solved benchmarks (\%Sol), fraction of benchmarks where we find an I/O-consistent program (\%Cons), average number of states explored (\#St), and average time in seconds.}
    \label{tab:main_comparison}
\end{table*}

\paragraph{Baselines}
We compare our method against three programming-by-example (\textbf{PBE-only}) baselines, \textsc{AlphaRegex} \citep{lee}, \textsc{DeepCoder} \citep{deepcoder}, and \textsc{RobustFill} \citep{robustfill}. \textsc{AlphaRegex} is an enumerative regex synthesizer that uses breadth-first search to find regexes that are consistent with the examples. Both \textsc{DeepCoder} and \textsc{RobustFill} are neural program synthesis approaches.
\textsc{DeepCoder} places a distribution over constructs and terminals based on examples, and uses this distribution to carry out DFS search, whereas \textsc{RobustFill} uses beam search to autoregressively build programs.

We further compare our method against prior multimodal program synthesis techniques, \textsc{Sketch} \citep{deepsketch} and \textsc{TreeSearch} \citep{algolisp} with appropriate tuning of the hyperparameters and the \textsc{Sketch} synthesizer for this setting. We do not compare against \textsc{SketchAdapt} \citep{infersketch} because it relies on the assumption that every program consistent with examples is the gold program, which does not hold in our setting. 


We also consider two NL-to-code models, Seq2Seq and TranX \citep{tranx}, which we modify to filter out partial programs that are inconsistent with the examples. Specifically, we adapt these baselines in a similar way as proposed in \citet{structregex} by filtering the beam at every timestep during search. Implementation details of all our baselines are in the appendix.

We refer to our Optimal Synthesis approach as  \textbf{\textsc{OpSynth}}. We also show ablations: \textsc{ASN}\textsuperscript{+$\mathcal{P}$} (ASN with our pruning during beam search), and \textsc{OpSynth}\textsuperscript{-$\mathcal{P}$} to further demonstrate the benefits of our approach over models like \cite{algolisp} that do not use such pruning. Finally, we also consider an extension denoted as  \textbf{\textsc{ OpSynth\textsuperscript{+$\mathcal{R}$}}}, which extends \textsc{OpSynth} with the \textsc{Attention A Model} from  \textsc{RobustFill} \citep{robustfill}, which encodes the examples $\phi$ using another set of LSTM layers. 
To combine these signals, we define the probability of applying rule $r$ on $n$ as:

\begin{multline*}
    p_\theta(r|n,P,N)=\text{softmax}(\text{FFNN}(h_n; \\ \ \ \ \ \ \ \text{Attn}(h_n, \text{context}(\nl); \text{Attn}(h_n, \text{context}(\spec)))).
\end{multline*}

\section{Results and Analysis}\label{sec:results}
In the following experiments, we evaluate our approach  based on two criteria: (1) accuracy, measured by the fraction of solved synthesis tasks, and (2) efficiency, measured by the number of partial programs searched and the run time.

\paragraph{Main Results}

Our main results are shown in 
Table~\ref{tab:main_comparison}.  We report results on two test sets from \streg{}; Test-E is annotated by a distinct set of annotators from the training set.

As shown in the top part of Table~\ref{tab:main_comparison}, pure PBE approaches do poorly on this dataset due to not utilizing NL. These approaches either fail to find a regex consistent with the examples within a time limit of 90 seconds or  the synthesized regex is semantically different  from the target one. These results from PBE-only approaches demonstrate the importance of using a model that places distributions over programs conditioned on the NL description.

The second and third parts of Table~\ref{tab:main_comparison} show results from prior multimodal neural synthesis approaches and NL-to-code models augmented with example-based pruning \citep{structregex}. \textsc{Sketch} slightly outperforms \textsc{TreeSearch}, solving 45\% and 30\% of the Test and Test-E set respectively. Seq2Seq\textsuperscript{+$\mathcal{P}$} and TranX\textsuperscript{+$\mathcal{P}$}, which perform beam search guided by these models but also check feasibility of partial programs before adding them to the beam, outperform these other techniques: TranX\textsuperscript{+$\mathcal{P}$} outperforms  Seq2Seq\textsuperscript{+$\mathcal{P}$} and solves 53\% of the benchmarks on Test and 38\% for Test-E.

The last part of  Table~\ref{tab:main_comparison} provides results about \toolname and its ablations. \toolname achieves a substantial improvement over TranX\textsuperscript{+$\mathcal{P}$} and is able to solve approximately 61\% of benchmarks in Test and 49\% in Test-E.  In addition to solving more benchmarks, \toolname also explores only a fraction of the states explored by TranX\textsuperscript{+$\mathcal{P}$}, leading to a speedup of more than $2.5\times$.

We now compare \toolname against three of its ablations. \textsc{OpSynth}\textsuperscript{-$\mathcal{P}$} does not use program analysis to prune infeasible partial programs (hence, we do not report explored states as a measure of runtime), and \textsc{ASN}\textsuperscript{+$\mathcal{P}$} is similar to \toolname except that it uses beam search (with beam size 20) combined with the same pruning technique. Both the program analysis component and optimal search are important: without these, we observe a deterioration in both accuracy and efficiency. The last row in  Table~\ref{tab:main_comparison} shows an extension of \toolname described in Section~\ref{sec:experimental_setup} where we incorporate the {\sc RobustFill} model. We find that {\sc RobustFill} is ineffective on its own, and incorporating it into our base synthesizer actually decreases performance. While such neural-guided PBE approaches ({\sc DeepCoder} \citep{deepcoder} and  {\sc RobustFill} \citep{robustfill}) have been successful in prior work, they do not appear to be effective on this challenging task, or not necessary in the presence of strong natural language hints. Additionally, these models both rely on millions of synthetic examples in the originally reported settings.

\paragraph{Optimality and efficiency.}

\begin{table}
    \centering
    \small
        \scalebox{0.90}{
        \begin{tabular}{lcccccc}
    \toprule
          & \%Opt & Gap & \%Sol  & \%Cons & \#St & Time \\
        \toprule
        Beam 5 & 50.4 & 1.11 & 39.0 & 65.1 &  \phantom{0}290 & \phantom{0}3.3\\
        Beam 10 & 59.4 & 1.08 & 42.8 & 72.2 & \phantom{0}660 & \phantom{0}6.8 \\
        Beam 15 & 63.2 & 0.84 & 44.1 & 76.8 & 1040 & 11.0\\
        Beam 20 & 66.2 &  0.69 & 45.8 & 78.2 & 1430 & 15.1 \\ 
        \midrule
        OpSynth & \bf 80.9 & \bf \phantom{0}0.0  & \bf 48.9 & \bf 80.9 & \bf 1320 & \bf 14.2 \\
        \bottomrule
    \end{tabular}
        }
        \caption{Comparison between \toolname and a beam search-based alternative with the same model. }
    \label{tab:optimality}
\end{table}

\begin{figure}[t]
    \centering
    \includegraphics[width=\linewidth,trim=0 20 0 20,clip]{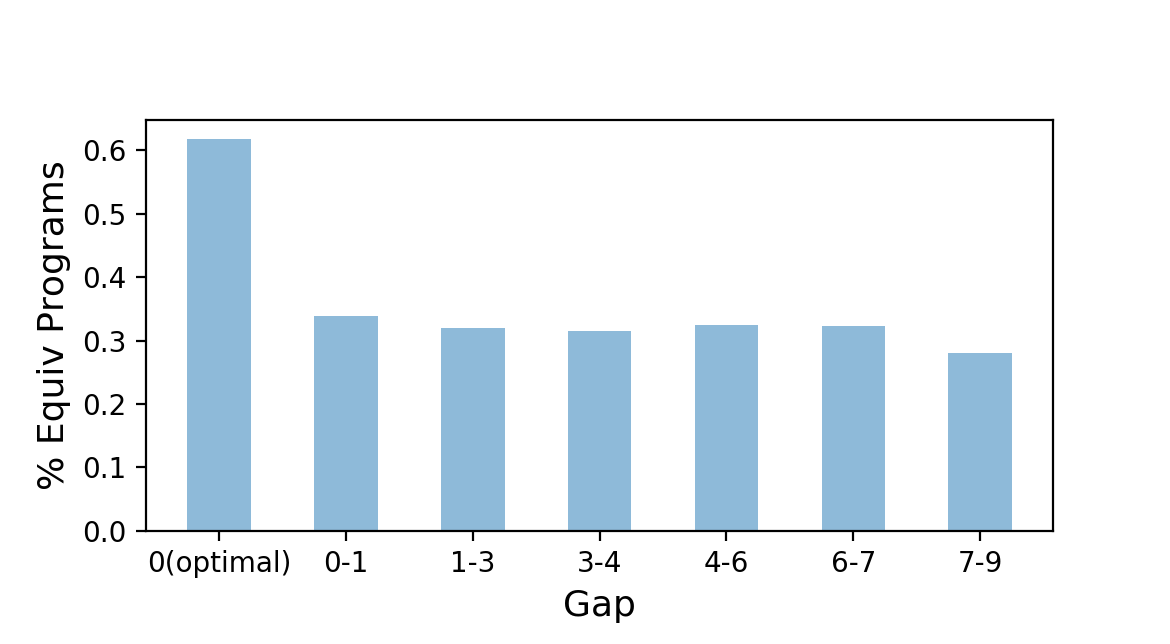}
    \caption{Fraction of programs equivalent to target regex based on score gap with the model-optimal program.}
    \label{fig:acc_by_gap}
\end{figure}

We now explore the benefits of optimality in more detail. Specifically, Table~\ref{tab:optimality} compares \toolname with an alternative that performs beam search with varying beam sizes for  Test-E. For  the purposes of this experiment, we terminate \toolname's search after it has explored  a maximum of 5000 states. For beam search, we terminate search when the beam is filled up with complete programs or the size of partial programs in the beam exceeds a threshold.

In Table~\ref{tab:optimality}, the column labeled ``\% Opt'' shows the percentage of optimal programs found by the search algorithm. We also show the gap (difference of log probability) between the best-scored programs found by each approach and the optimal programs; this is reported in the column labeled ``Gap''.  Finally, the last three columns show the fraction of solved instances (accuracy), the fraction of programs consistent with the examples, and the  number of explored states respectively.


As seen in Table~\ref{tab:optimality}, our optimal synthesizer finds the optimal program in 80.9\% of cases and solves 46.9\% of instances after exploring 810 states on average. Beam search with a beam size of 20 only finds 66.2\% optimal programs and solves fewer instances (45.8\%)  despite exploring more states. 

We further  evaluate the benefit of finding model-optimal programs in Figure~\ref{fig:acc_by_gap}. Here, we focus only on those programs that are consistent with the input-output examples. The x-axis shows the score gap from the optimal program, and the y-axis shows the percent of programs that are functionally equivalent to the desired regex.  As shown in Figure~\ref{fig:acc_by_gap}, 62\% of optimal programs are equivalent to the target regex, whereas only around 30\% of the nearly-optimal programs functionally match the ground truth.

\begin{figure}[t]
    \centering
    \includegraphics[width=\linewidth, trim=0 0 0 20,clip]{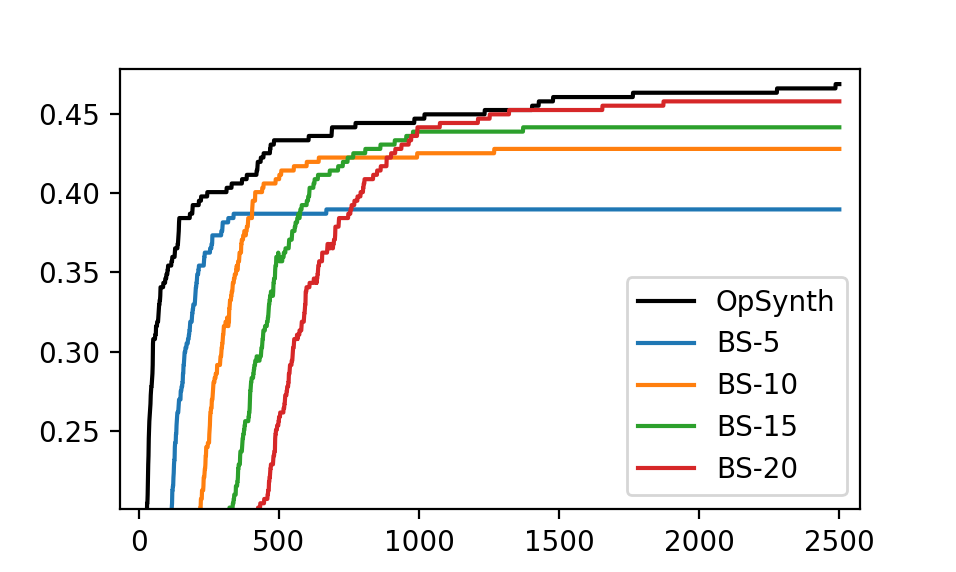}
         \caption{Fraction of solved instances versus the number of explored states. }
    \label{fig:efficiency}
\end{figure}


Finally,  Figure~\ref{fig:efficiency} plots the fraction of solved instances with respect to the number of states explored. \toolname consistently solves more instances than the other methods given the same budget without requiring a pre-specified beam size. 

\section{Related Work}
\paragraph{Natural Language to Logical Forms}
Semantic parsing (translating NL to executable logical forms) has been a long-standing research problem in the NLP community \citep{geoquery, atis}. Traditional grammar-based semantic parsers can construct database queries \citep{geoquery, atis}, lambda calculus expressions \citep{PCG} and programs in other DSLs \citep{kb13, overnight}. Recent advances in deep learning have explored seq2seq \citep{jialiang15} or seq2tree models \citep{dong16} that directly translate the NL into a logical form, and syntax-based models \citep{tranx} can also inject syntactic constraints. 
Our approach relies on similar neural modeling to predict the distribution of target programs from NL. However, search is much more complex in our example-guided synthesis setting, whereas prior neural semantic parsers approximate the best solution using beam search \citep{dong16, tranx}.

\paragraph{Optimal Synthesis with Examples} Prior work on PBE considers various notions of optimality using cost functions \citep{metasketch,lambda2,stoke} and machine learning \citep{menon13}. The first line of work allows users to specify the desired properties of the synthesized program; for instance, smaller program size, lower execution time, or more efficient memory usage. \citet{menon13} define optimality as the most likely constructs given a set of examples under a probabilistic context free grammar. 
In this work, we focus on a new setting where we guarantee the optimality with respect to a neural modal, which can encode specifications such as natural language that are hard to formulate as simple cost functions.



\paragraph{Multimodal Program Synthesis} There has been recent interest in synthesizing programs using a combination of natural language and examples \citep{algolisp,mars,infersketch,andreas2018,gulwanimulti}.
Specifically, \citet{regel} and \citet{deepsketch} parse the natural language into an intermediate representation and then use it to guide enumeration, but they do not have any optimality guarantees with respect to the neural model. \citet{spoc} synthesize programs by performing line-by-line translation of pseudocode to code and verify consistency with test cases at the end.  However, unlike our approach, their technique enumerates syntactically ill-formed programs, which they address using compiler error localization.

\section{Conclusion}

In this paper, we presented a technique for optimal synthesis from multimodal specifications. On a benchmark of complex regex synthesis problems, we showed that this approach is substantially more accurate than past models, and our synthesis algorithm finds the model-optimal program more frequently compared to beam search.

While we have evaluated this method on regular expressions, our technique is general and can be applied to other classic PBE domains on which powerful abstract interpretation techniques for feasibility checking are available, such as table transformations \cite{FengEtAl2017}, tensor and string manipulations \cite{wang2017program}, and other data wrangling \cite{neo}. Our technique, especially the notion of optimality with respect to a model, can also be valuable for more general program synthesis \cite{alet2021large,austin2021program} if pruning techniques can be developed for the particular tasks to be performed. In particular, \citet{austin2021program} note that large language models are not good at modeling execution semantics of programs; we see our execution-guided pruning techniques as a path forward in this domain.

\section*{Acknowledgments}

We thank the anonymous reviewers for their valuable feedback. This work was partially supported by NSF Grant IIS-1814522, NSF Grant SHF-1762299, NSF Award CCF-1811865, and a gift from Salesforce Inc. 

\bibliography{anthology,custom}
\bibliographystyle{acl_natbib}

\newpage
\appendix
\section{Guarantee of Optimality}
\begin{theorem} [Guarantee of Optimality]
Suppose given a CFG $\grammar = (V, \Sigma, R, S_0) $, specification $\spec$, natural language $\nl$ and model $\model_\theta$, $\textsc{OpSynth}$ returns a program $\partialprog^*$. Then, for any program $\partialprog \models \phi$, $\model_\theta(\partialprog)\leq\model_\theta(\partialprog^*)$.
\end{theorem}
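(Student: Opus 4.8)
The plan is to prove the theorem by establishing that the algorithm's priority queue $\worklist$ dequeues complete programs in non-increasing order of their true probability $\model_\theta(\cdot \mid \nl)$, and that the upper bound $u_\theta$ correctly over-estimates the score of any completion. Since the algorithm returns the first dequeued complete program that passes the feasibility check, it suffices to show that no feasible complete program with strictly higher score could have been missed. I would structure the argument around two lemmas: a soundness lemma for the pruning (\textsc{Infeasible} never discards a partial program that has a valid completion satisfying $\spec$), and a correctness lemma for the priority ordering.

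\textbf{Step 1 (upper bound is a genuine upper bound).} First I would show that for any partial program $\partialprog$ and any complete program $\partialprog_c$ derivable from $\partialprog$ (i.e., obtained by a sequence of \textsc{Expand} operations), we have $\model_\theta(\partialprog_c \mid \nl) \leq u_\theta(\partialprog \mid \nl)$. This follows directly from the factorization in the definition of $u_\theta$: the score $\model_\theta(\partialprog_c)$ is the product over all concrete nodes of $\partialprog_c$, which splits into the concrete nodes already fixed in $\partialprog$ (contributing exactly $p_\theta(\partialprog \mid \nl)$, by the order-invariance property stated in the paper) and the nodes filling in the inconcrete leaves $\mathcal{I}(\partialprog)$ together with everything below them. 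Each such factor is a probability bounded above by $\max_r p_\theta(r \mid \pi(\partialprog,n),\nl)$ for the topmost inconcrete node, and the remaining factors are probabilities $\le 1$; multiplying gives the bound. The key fact I rely on here is the order-invariance of the factorization, which guarantees that the contribution of the already-fixed concrete nodes is independent of the expansion order.

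\textbf{Step 2 (monotonicity along expansions and the dequeue order).} Next I would argue that when a partial program $\partialprog$ is expanded into children $\partialprog'$, we have $u_\theta(\partialprog' \mid \nl) \leq u_\theta(\partialprog \mid \nl)$, so the upper bound is non-increasing as programs become more concrete. Combined with the best-first (priority-queue) discipline, this gives the central invariant: at any point, for every complete program $\partialprog_c$ still derivable from some partial program on $\worklist$, its score satisfies $\model_\theta(\partialprog_c) \le \rho$, where $\rho$ is the upper bound of the item currently being dequeued. Hence when the algorithm dequeues and returns a \emph{complete}, feasible program $\partialprog^*$, its priority equals its true score $\model_\theta(\partialprog^*)$ (an inconcrete-free program has $u_\theta = p_\theta = \model_\theta$), and every program not yet dequeued has upper bound, and therefore true score, at most $\model_\theta(\partialprog^*)$.

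\textbf{Step 3 (pruning is sound).} Finally I would invoke the soundness of \textsc{Infeasible} from Figure~\ref{fig:encode}: if $\partialprog \not\models \spec$ then no completion of $\partialprog$ satisfies $\spec$, so pruning $\partialprog$ cannot discard the true optimum. Assembling the three steps: any $\partialprog \models \spec$ is a completion of a partial program that was (or would have been) on $\worklist$ with priority $\ge \model_\theta(\partialprog)$; since $\partialprog^*$ was returned before that priority was reached, we get $\model_\theta(\partialprog) \le \model_\theta(\partialprog^*)$. The main obstacle I anticipate is \emph{Step 2}: one must carefully argue that every feasible complete program is in fact reachable through the enumeration and is never lost before the return — i.e., that the set of programs ``represented'' on the worklist always covers all un-dequeued feasible completions. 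This requires a clean loop invariant stating that every complete $\partialprog \models \spec$ not yet output is a descendant of some worklist entry whose priority upper-bounds $\model_\theta(\partialprog)$, and checking that both the \textsc{Expand} step (which preserves coverage) and the \textsc{Infeasible} step (which only removes covered-but-infeasible subtrees) maintain it.
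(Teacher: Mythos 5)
Your proposal is correct and follows essentially the same route as the paper's own proof, which is a terse contradiction argument asserting that any feasible program $\partialprog$ with $\model_\theta(\partialprog)>\model_\theta(\partialprog^*)$ would have been dequeued as a complete program and returned before $\partialprog^*$. Your Steps 1--3 merely make explicit what that argument leaves implicit---that $u_\theta$ genuinely upper-bounds the score of every completion, that the best-first discipline therefore dequeues complete programs in non-increasing score order, and that \textsc{Infeasible} pruning never discards a partial program with a feasible completion---so the two proofs coincide in substance, with yours supplying the rigor.
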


\begin{proof}
Assume $\partialprog^*$ is the returned program of $\textsc{OpSynth}(\grammar, \spec, \nl, \model_\theta)$ and there exits a program $\partialprog$ such that $\partialprog \models \phi$ and $\model_\theta(\partialprog)>\model_\theta(\partialprog^*)$. Since $\model_\theta(\partialprog)>\model_\theta(\partialprog^*)$, $P$ must have been present in the worklist and considered as a concrete program before the model visited $P^*$. But then, given that $\partialprog \models \phi$, then $\textsc{OpSynth}$ will return $\partialprog$ rather than $\partialprog ^ *$, which contradicts the assumption.
\end{proof}
\section{CFG for Regular Expressions}
We present the CFG for the regex domain language taken from \textsc{StructuredRegex}~\citep{structregex} in Figure~\ref{fig:regex_cfg}. Its correspondence to the constructions in the standard regular expression is shown in the Appendix A of \citet{structregex}.

\begin{figure}[t]
    \centering
    \small
    \begin{align*}
        S_0 \rightarrow& \ V_1 \\
        V_1 \rightarrow& \ T_1 \ | \ {\tt startwith}(V_1) \ | \ {\tt endwith}(V_1) \ | \ {\tt contain} (V_1) \\ 
        |& \ {\tt not}(V_1) \ | \ {\tt and}(V_1, V_1) \ | \ {\tt or}(V_1, V_1) \\
        |& \ {\tt optional}(V_1) \ | \ {\tt star}(V_1) \\
        |& \ \text{{\tt concat}\footnotemark} (V_1, V_1) \ | \ {\tt repeat}(V_1, k) \\
        |& \ {\tt repatleast}(V_1, k) \ | \ {\tt reprange}(V_1, k_1, k_2) \\ 
        T_1 \rightarrow& \  c  \ | \ \texttt{<let>} \ | \ \texttt{<cap>} \ | \ \texttt{<low>} \\
           | & \ \texttt{<num>} \ | \ \texttt{<any>} \ | \ \texttt{<spec>} \ | \ \texttt{<null>} \\
    \end{align*}
    \caption{Regex CFG. Here $k \in \mathbb{Z}^+$ and $c$ is a character class, such as $\texttt{<a>}$, $\texttt{<1>}$, etc. }
    \label{fig:regex_cfg}
\end{figure}
\footnotetext{We note {\tt concat} as {\tt cat} in the paper.}

\section{Encoding for the $\textsc{Infeasible}$ Procedure for Regex}
\begin{figure*}[t]
    \centering
    \small
    \begin{align*}
    \ & \ \Phi^{\{+,-\}}(\texttt{InLang},y, \mathbf{x}, \mathbf{z}) = (y \wedge (\mathbf{x} \in z_0 ))  \\
    f \in \{\texttt{startwith}, \texttt{endwith}, \texttt{contain}, \texttt{not}, \texttt{optional}, \texttt{star}\} \ & \ \Phi^{\{+,-\}}(f, y, \mathbf{z}) = (y =  f(z_1)) \\
    f \in \{\texttt{cat}, \texttt{and}, \texttt{or}, \texttt{repeat}, \texttt{repatleast}\}  \ & \ \Phi^{\{+,-\}}(f, y, \mathbf{z}) = (y =  f(z_1, z_2)) \\
    f \in \{\texttt{reprange}\}  \ & \ \Phi^{\{+,-\}}(f, y, \mathbf{z}) = (y = f(z_1, z_2, z_3)) \\
    \end{align*}
    \caption{$\Phi^{+,-}$ in the regex domain. Here we omit the $T_1$ and $k$ case. The encoding for non-terminal symbols is rule (a) in Figure~\ref{fig:encode} where $\top = \texttt{star}(\texttt{<any>})$ and $\bot  = \texttt{<null>}$. }\label{fig:encoderegex}
\end{figure*}


We describe our detailed instantiation of the $\textsc{Infeasible}$ procedure described in Section \ref{sec:synthesis} in the regex domain. Recall that we encode the semantics of a regex in terms of the set of strings it can match, and we use the program $\texttt{InLang}(s, \partialprog)$ (denoted as $\partialprog'$) to represent whether $s$ is in the set of strings that can be matched by $\partialprog$.  To encode a program $\partialprog'$ for consistency checking, we use the set of encoding rules presented in Figure~\ref{fig:encoderegex} to generate its over- and under- approximated semantics. In the regex domain, for most of the constructs, we can model the precise semantics except for the non-terminal symbols in the partial program.

\section{Neural Model Details}
 As described in Section~\ref{sec:neural}, our neural model resembles an Abstract Syntax Network~\citep{asn} tailored to fit the regex DSL used in \streg{}. We show the grammar in Figure~\ref{fig:grammar}. As there is no production rule having optional or sequential cardinality, we do not include the ``constructor field module'' from the ASN in our implementation. We encode the NL using a single-layer Bi-LSTM encoder with a hidden state size of 100. In the decoding phase, we set the size of the hidden state in the decoder LSTM as well as the size of the embedding of $\mathcal{R}(n_j, i_j)$ to be 100. To obtain the contexts, we use the Luong general attention scheme~\citep{attention}. To prevent overfitting, we apply a dropout of 0.3 to all the embedding, outputs of recurrent modules, and context vectors. Our model is trained using Adam~\citep{adam} with a learning rate of 0.003 and a batch size of 25.

\section{\textsc{SelectLeaf Function} Details}
The \textsc{SelectLeaf} function selects one non-terminal leaf node in the partial program to expand. We find that when programmatic constraints are integrated into the search process, the order of choose which non-terminal to expand can impact the cost needed to synthesize the target program. We give a concrete example of how the way we select non-terminal leaf nodes to expand can affect the cost of synthesis. 
Consider a timestep where we obtain the feasible partial program \texttt{\small cat($V_1$,$V_2$)} from the queue, where both $V_1$ and $V_2$ can be expanded to \regdsl{<0>} or \regdsl{<1>} with a probabilities $0.9$ and $0.1$ respectively. Suppose \regdsl{cat(<0>,$V_2$)} is feasible, \regdsl{cat($V_1$,<0>)} is infeasible, and the only feasible complete program is \texttt{\small cat(<1>,<1>)}. If we choose to expand $V_1$ first, then the search procedure goes as follows: \{(\regdsl{cat(<0>,$V_2$)}, \cmark) $\rightarrow$ (\regdsl{cat(<0>,<0>)},\xmark) $\rightarrow$ (\regdsl{cat(<0>,<1>)},\xmark)$\rightarrow$ (\regdsl{cat(<1>,$V_2$)},\xmark)$\rightarrow$ (\regdsl{cat(<1>,<0>)},\xmark)$\rightarrow$ (\regdsl{cat(<1>,<1>)},\cmark)\}, which takes 6 steps. Now, if we expand $V_2$ first, the search procedure is: \{(\regdsl{cat($V_1$,<0>)}, \xmark) $\rightarrow$ (\regdsl{cat($V_1$,<1>)},\cmark),$\rightarrow$ (\regdsl{cat(<0>,<1>)},\xmark),$\rightarrow$ (\regdsl{cat(<1>,<1>)},\cmark)\}, which only takes 4 steps.

We want to find an order to expand the nodes that leads to most effective pruning. We tested the following ways of selecting leaf nodes: (1) pre-order traversal, (2) choosing the highest-level leaf node, (3) choosing the lowest-entropy leaf node.
We found that pre-order traversal worked better than the other strategies in most cases.
Given the same budget, using per-order traversal solves more programs while exploring fewer states compared to the other ways.
The superiority of pre-order traversal on the regex synthesis task can be attributed to that our {\sc Infeasible} function needs concrete terminal leaf nodes to prune effectively, and using pre-order traversal prioritizes deepest nodes and usually yields terminal leaf nodes more quickly than other strategies.
\section{Implementation Details of the Baselines}
\paragraph{\textsc{AlphaRegex}} We implemented the top-down enumerative synthesizer presented in \cite{lee}. Although \cite{lee} only uses \texttt{<0>} and \texttt{<1>} as terminals, here we extended the synthesizer to support most of the ASCII characters. 

\paragraph{\textsc{DeepCoder}}
We implemented \textsc{DeepCoder} with a few modifications from its original implementation \citep{deepcoder}.
First, we assign each token in the examples with a \emph{class}, and embed the token by both its value and its class. For instance, consider the positive example \texttt{\small (ax4,+)} of the regex \texttt{\small concat(repeat(<low>,2),repatleast(<num>, 1)} (2 lower letters followed by 1 or more digits. We assign ``\texttt{a}'' and  ``\texttt{b}'' with the ``\texttt{<low>}'' class, and assign ``\texttt{4}'' with the ``\texttt{<num>}'' class. The final embedding of the token ``\texttt{a}'' is the concatenation of the embedding of the value $\text{Emb}(\texttt{\small a})$ and the class $\text{Emb}({\texttt{<low>}})$. We use such combined embeddings for better generalizability. Then, we encode the examples with a Bi-LSTM encoder. Each example is encoded into a hidden vector, which is later max-pooled. Finally, we apply a linear layer on the pooled representation for the whole program, and predict the the set of probabilities for each of the constructs in the DSL.

We extended {\sc AlphaRegex} to synthesize programs using the probability of constructs obtained from the neural model. In the \textsc{StructuredRegex} grammar, we associate each construct with the score returned from the neural network and calculate the score of a partial program by summing up the score of all the constructs that are used in the partial program. We specify the synthesizer to prioritize exploring the partial programs with the highest score so far. 

Recall that in Section~\ref{sec:results} that {\sc DeepCoder} doesn't achieve high performance in the {\sc StructureRegex} dataset. Since  most of the constructs are recursive in the regex language and {\sc DeepCoder} search is essentially doing a depth-first search, the synthesizer first needs to exhaustively check all possible programs associated with the highest probability constructs before it can move on to explore those programs with any other constructs. For example, suppose the {\tt concat} has the highest probability and the synthesizer explores programs up to maximum depth $5$, the synthesizer will prioritize exploring programs like {\tt \small concat(concat(concat(concat(<low>))))} and searching in this way does not help the synthesizer to find the ground truth regex. 

\paragraph{\textsc{RobustFill}}
We implemented the {\sc Attention A} model from {\sc RobustFill} \citep{robustfill}, which predicts programs given I/O examples. We encode the I/O with the same I/O embedding and I/O encoder used in our implementation of {\sc DeepCoder}. We replaced the LSTM decoder in the original implementation with our ASN decoder. During decoding, we extract a context vector from each of the examples provided in the example set, and pool them with max-pooling as the final context vector. The probability distribution over rules for node $n$ is then given as:
\begin{align*}
    \text{Attn}(h_n, \text{context}(\spec))= \text{MaxPool}(\\ \{\text{Attn}(h_n,\text{context}(e))\}_{e\in\examples})
\end{align*}

\begin{align*}
    p_\theta(r|n,P,N)=\text{softmax}(\text{FFNN}(h_n; \\ \text{Attn}(h_n, \text{context}(\spec))))
\end{align*}

We set the size of value embedding and class embedding to be 50, and the size of hidden state in encoder Bi-LSTM and LSTM in ASN to be 100. 

\paragraph{\textsc{TreeSearch}} As the code of {\sc TreeSearch} \citep{algolisp} is not publicly available code, we implemented our own version of {\sc TreeSearch} on top of {\sc TranX} which is reported to be more powerful than the originally used {\sc Seq2Tree} on various datasets \citep{tranxdemo}. During search, we set the threshold to be $10^{-5}$, and the max queue size to be 100.

\paragraph{\textsc{OpSynth\textsuperscript{\sc +$\mathcal{R}$}}}
We naturally combine {\sc OpSynth} and {\sc RobustFill} by concatenating the context vectors from NL and examples, as in Section~\ref{sec:implementation}. The hyper-parameters for the NL encoder are the same as those for the base synthesizer, and the hyper-parameters for the I/O encoder are the same as {\sc RobustFill}.

\end{document}